\documentclass[sigconf, nonacm]{acmart}
\usepackage{graphicx} % Required for inserting images

\usepackage{ amsthm, amssymb}
\newtheorem{proposition}{Proposition}

\usepackage[ruled,vlined]{algorithm2e}
\usepackage{array}
\usepackage{tikz}
\usetikzlibrary{arrows.meta,positioning}
\usepackage[most]{tcolorbox}
\usepackage{mathdots}
\usepackage{booktabs}
\newcommand{\eat}[1]{}
\usepackage{multirow}
\usetikzlibrary{arrows.meta,positioning,fit,calc,matrix}

\newcommand\vldbyear{2026}
\newcommand\vldbworkshop{Applied AI for Database Systems and Applications (AIDB)}
\newcommand\vldbauthors{\authors}
\newcommand\vldbtitle{\shorttitle} 
\newcommand\vldbavailabilityurl{https://github.com/alanganyDB/Structural-Adversarial-Attacks-on-Relational-Deep-Learning-under-Integrity-Constraints.git}
\newcommand\vldbpagestyle{empty} 

\begin{document}
\title{Structural Adversarial Attacks on Relational Deep Learning under Integrity Constraints}

%%
%% The "author" command and its associated commands are used to define the authors and their affiliations.
\author{Alan Gany}
\affiliation{%
  \institution{Univ.\ Grenoble Alpes, CNRS, Grenoble INP, LIG}
  \city{Grenoble}
  \country{France}
  \postcode{38000}
}
\email{alan.gany@univ-grenoble-alpes.fr}

\author{Bogdan Cautis}
\affiliation{%
  \institution{Singapore Institute of Technology}
  \country{Singapore}
}
\email{bogdan.cautis@singaporetech.edu.sg}

\author{Silviu Maniu}
\affiliation{%
  \institution{Univ.\ Grenoble Alpes, CNRS, Grenoble INP, LIG}
  \city{Grenoble}
  \country{France}
  \postcode{38000}
}
\email{silviu.maniu@univ-grenoble-alpes.fr}

%%
%% The abstract is a short summary of the work to be presented in the
%% article.
\begin{abstract}
Relational Deep Learning (RDL) has become a standard methodology for
machine learning on relational databases: the database is encoded as a
heterogeneous temporal graph in which tuples become nodes and primary-key
to foreign-key (PK--FK) dependencies become typed edges, over which a
graph neural network is trained for downstream prediction. We study the
adversarial robustness of this pipeline. We consider a white-box attacker
who knows how the graph is built and the model is trained, reasons about
perturbations on the graph, but can only act on the upstream database --
by rewiring foreign-key references while preserving the integrity
constraints of the schema (foreign-key validity, the degree-one FK
constraint, and functional dependencies). This restricts the attacker to
a constrained, combinatorial set of admissible edits under a global
perturbation budget, which is intractable to explore exhaustively and
made non-additive by GNN message passing. We investigate seven attack heuristics -- two random sampling baselines and five gradient-guided variants that exploit differentiable edge masks -- and evaluate them on the RelBench \texttt{rel-f1} benchmark. Gradient-based attacks consistently outperform
random baselines on regression tasks, whereas gains on classification are
smaller, which we attribute to low label-flip rates and greater local
stability of classification outputs.
\end{abstract}

\maketitle

%%% do not modify the following VLDB block %%
%%% VLDB block start %%%
\pagestyle{\vldbpagestyle}
\begingroup\small\noindent\raggedright\textbf{VLDB Workshop Reference Format:}\\
\vldbauthors. \vldbtitle. VLDB \vldbyear\ Workshop: \vldbworkshop.\\ %\vldbvolume(\vldbissue): \vldbpages, \vldbyear.\\
%\href{https://doi.org/\vldbdoi}{doi:\vldbdoi}
\endgroup
\begingroup
\renewcommand\thefootnote{}\footnote{\noindent
This work is licensed under the Creative Commons BY-NC-ND 4.0 International License. Visit \url{https://creativecommons.org/licenses/by-nc-nd/4.0/} to view a copy of this license.
}\addtocounter{footnote}{-1}\endgroup
%%% VLDB block end %%%

%%% do not modify the following VLDB block %%
%%% VLDB block start %%%
\ifdefempty{\vldbavailabilityurl}{}{
\vspace{.3cm}
\begingroup\small\noindent\raggedright\textbf{VLDB Workshop Artifact Availability:}\\
The source code, data, and/or other artifacts have been made available at \url{\vldbavailabilityurl}.
\endgroup
}
%%% VLDB block end %%%

\section{Introduction}
Deep neural networks achieve strong predictive performance across many
domains, yet they remain fragile: small, carefully crafted perturbations
of their inputs can drastically alter their predictions, exposing these
systems to malicious manipulation. Adversarial attacks, and in particular
gradient-based ones, have been demonstrated on a wide range of data
structures, including images, tabular data, graphs, and knowledge graphs.
Attacks on dense inputs such as images are comparatively easy to
formulate, whereas attacks on sparse, discrete structures are
substantially harder because of their combinatorial nature; gradient-based
methods nonetheless remain effective in these settings.

Relational databases store large volumes of structured, high-value
information, which makes them attractive targets. They have, however,
remained largely outside the scope of adversarial machine learning, simply
because few learning methods operated directly on relational data.
Relational Deep Learning (RDL) changes this
picture~\cite{fey2024position, robinson2024relbench, gu2026relbench}. RDL
encodes a multi-table database as a heterogeneous temporal graph in which
each tuple becomes a node and each primary-key to foreign-key (PK--FK)
dependency becomes a typed edge, and it trains a graph neural network
end-to-end for a downstream prediction task. By turning the database
itself into the model's input, RDL also turns it into an attack surface.

We study the adversarial robustness of this pipeline under a white-box
threat model. The attacker knows how the graph is constructed from the
schema and how the model is trained, and can therefore reproduce both, and
reason about perturbations directly on the graph. The attacker cannot,
however, edit the graph itself: the graph is a derived artifact, and the
only actionable surface is the upstream database. Every perturbation must
therefore be expressed as an edit to the database that leaves it
consistent, i.e., that does not violate the integrity constraints of the
schema. An illustration of this setting is provided in
Figure~\ref{fig:db_to_poisoned_prediction}.

\begin{figure}[t]
\centering

\scalebox{0.5}{%

\begin{tikzpicture}[
    font=\sffamily\bfseries\Large,
    box/.style={draw=#1, dashed, ultra thick, rounded corners=1pt},
    doc/.style={draw=cleanblue, ultra thick, rounded corners=3pt, fill=white},
    db/.style={draw=cleanblue, ultra thick, fill=white},
    nodec/.style={circle, draw=#1, ultra thick, fill=white, minimum size=8mm},
    relbox/.style={draw=#1, dotted, ultra thick, rounded corners=1pt},
    arr/.style={->, line width=2.2pt, >=stealth},
    edge/.style={<->, line width=1.9pt, #1},
    ghost/.style={<->, line width=1.5pt, gray!45, dashed},
    rededge/.style={<->, line width=2.4pt, red, >=stealth},
    lab/.style={font=\sffamily\bfseries\Large},
    mathlab/.style={font=\boldmath\bfseries\Large}
]

% =================================================
% Colors
% =================================================

\definecolor{cleanblue}{HTML}{005A9C}
\definecolor{softblue}{HTML}{2B7BBB}
\definecolor{softgreen}{HTML}{4C9A2A}
\definecolor{softred}{HTML}{D94A38}
\definecolor{purplebox}{HTML}{8E7CC3}
\definecolor{orangebox}{HTML}{C44E2B}

% =================================================
% Panel 1: Clean relational database
% =================================================

\node[
    box=cleanblue,
    minimum width=6.2cm,
    minimum height=5.2cm
] (dbbox) at (0,3) {};

% -------------------------------------------------
% Files
% -------------------------------------------------

\begin{scope}[shift={(0,3.90)}, scale=1.18]

    \foreach \x in {-2,0,2}{

        \begin{scope}[shift={(\x,0)}]

            \draw[doc] (-0.45,0) rectangle (0.45,1.35);

            \draw[doc]
                (0.20,1.35)
                --
                (0.45,1.10)
                --
                (0.20,1.10);

            \foreach \j in {0,...,5}{
                \draw[cleanblue, very thick]
                    (-0.25,1.05-0.18*\j)
                    --
                    (0.25,1.05-0.18*\j);
            }

        \end{scope}
    }

\end{scope}

% -------------------------------------------------
% Database
% -------------------------------------------------

\begin{scope}[shift={(0,2.50)}]

    % top ellipse
    \draw[db]
        (0,0.35)
        ellipse (1.05 and 0.25);

    % borders
    \draw[db] (-1.05,-0.75) -- (-1.05,0.35);
    \draw[db] ( 1.05,-0.75) -- ( 1.05,0.35);

    % bottom visible arc only
    \draw[db]
        (-1.05,-0.75)
        arc (-180:0:1.05 and 0.25);

    % visible middle rings
    \draw[cleanblue, ultra thick]
        (-1.05,-0.05)
        arc (180:360:1.05 and 0.25);

    \draw[cleanblue, ultra thick]
        (-1.05,-0.40)
        arc (180:360:1.05 and 0.25);

\end{scope}

% arrows
\draw[arr, cleanblue] (-2.35,3.90) -- (-1.05,2.80);
\draw[arr, cleanblue] (0,3.90) -- (0,3);
\draw[arr, cleanblue] (2.35,3.90) -- (1.05,2.80);

\node[
    align=center,
    font=\sffamily\bfseries\LARGE
]
at (0,0.95)
{Clean\\Relational database};

% =================================================
% Panel 2: Clean entity graph (licit: one parent per child)
% =================================================

\node[
    box=purplebox,
    minimum width=5.5cm,
    minimum height=5.5cm
] (cgbox) at (7.0,3) {};

\begin{scope}[shift={(7.0,3)}]

    \node[
        relbox=softblue,
        minimum width=1.0cm,
        minimum height=3.1cm
    ]
    at (-1.35,0.65) {};

    \node[
        relbox=softgreen,
        minimum width=1.0cm,
        minimum height=2.2cm
    ]
    at (1.35,0.65) {};

    \node[
        relbox=softred,
        minimum width=1.0cm,
        minimum height=2.0cm
    ]
    at (-1.35,-1.35) {};

    % blue nodes
    \foreach \y/\i in {1.60/1,0.85/2,0.10/3}{
        \node[
            nodec=softblue,
            label={[mathlab]left:{$(\,x_{\i}\,)$}}
        ]
        (cB\i)
        at (-1.35,\y) {};
    }

    % green nodes
    \foreach \y/\i in {1.05/4,0.30/5}{
        \node[
            nodec=softgreen,
            label={[mathlab]right:{$(\,x_{\i}\,)$}}
        ]
        (cG\i)
        at (1.35,\y) {};
    }

    % red nodes
    \foreach \y/\i in {-0.90/6,-1.65/7}{
        \node[
            nodec=softred,
            label={[mathlab]left:{$(\,x_{\i}\,)$}}
        ]
        (cR\i)
        at (-1.35,\y) {};
    }

    % r1 edges (blue children -> green parent): exactly one parent per child
    \draw[edge=softblue] (cB1) -- (cG4);
    \draw[edge=softblue] (cB2) -- (cG4);
    \draw[edge=softblue] (cB3) -- (cG5);

    % r2 edges (red children -> green parent)
    \draw[edge=softred!45] (cR6) -- (cG5);
    \draw[edge=softred!45] (cR7) -- (cG5);

\end{scope}

\node[
    align=center,
    font=\sffamily\bfseries\LARGE
]
at (7.5,0.80)
{Clean\\Entity graph};

% =================================================
% Panel 3: Perturbed entity graph (initial + 2 rewirings, still licit)
% =================================================

\node[
    box=orangebox,
    minimum width=5.5cm,
    minimum height=5.5cm
] (pgbox) at (7.0,-3.1) {};

\begin{scope}[shift={(7.0,-3.1)}]

    \node[
        relbox=softblue,
        minimum width=1.0cm,
        minimum height=3.1cm
    ]
    at (-1.35,0.65) {};

    \node[
        relbox=softgreen,
        minimum width=1.0cm,
        minimum height=2.2cm
    ]
    at (1.35,0.65) {};

    \node[
        relbox=softred,
        minimum width=1.0cm,
        minimum height=2.0cm
    ]
    at (-1.35,-1.35) {};

    % blue nodes
    \foreach \y/\i in {1.60/1,0.85/2,0.10/3}{
        \node[
            nodec=softblue,
            label={[mathlab]left:{$(\,x_{\i}\,)$}}
        ]
        (pB\i)
        at (-1.35,\y) {};
    }

    % green nodes
    \foreach \y/\i in {1.05/4,0.30/5}{
        \node[
            nodec=softgreen,
            label={[mathlab]right:{$(\,x_{\i}\,)$}}
        ]
        (pG\i)
        at (1.35,\y) {};
    }

    % red nodes
    \foreach \y/\i in {-0.90/6,-1.65/7}{
        \node[
            nodec=softred,
            label={[mathlab]left:{$(\,x_{\i}\,)$}}
        ]
        (pR\i)
        at (-1.35,\y) {};
    }

    % replaced edges, shown faded (old parents of x2 and x3)
    \draw[ghost] (pB2) -- (pG4);
    \draw[ghost] (pB3) -- (pG5);

    % unchanged edges
    \draw[edge=softblue] (pB1) -- (pG4);
    \draw[edge=softred!45] (pR6) -- (pG5);
    \draw[edge=softred!45] (pR7) -- (pG5);

    % rewired edges (each reassigns one child to a new parent; one parent per child preserved)
    \draw[rededge] (pB2) -- (pG5);
    \draw[rededge] (pB3) -- (pG4);

\end{scope}

\node[
    align=center,
    font=\sffamily\bfseries\LARGE
]
at (7.8,-5.2)
{Perturbed\\Entity graph};

% =================================================
% Panel 4: Prediction
% Text kept unchanged on purpose
% =================================================

\node[
    box=red,
    minimum width=6.2cm,
    minimum height=5.2cm
]
(predbox)
at (0,-3.1)
{};

\begin{scope}[shift={(0,-3.25)}]

    \draw[
        cleanblue,
        ultra thick,
        rounded corners=4pt
    ]
    (-2.2,-1.55)
    rectangle
    (2.2,1.65);

    \draw[cleanblue, ultra thick]
        (-2.2,1.00)
        --
        (2.2,1.00);

    \draw[cleanblue, ultra thick]
        (0,1.00)
        --
        (0,-1.55);

    \node[
        font=\bfseries\Huge
    ]
    at (0,1.32)
    {Prediction};

    \node[
        draw=red,
        ultra thick,
        rotate=-17,
        font=\bfseries\Huge,
        text=black,
        fill=white,
        inner sep=4pt
    ]
    at (-0.25,0.05)
    {Poisoned};

\end{scope}

% =================================================
% Global arrows
% =================================================

\draw[
    ->,
    line width=2.8pt,
    black,
    >=stealth
]
(3.25,3)
--
(4.25,3);

\draw[
    ->,
    line width=2.8pt,
    black,
    >=stealth
]
(4.2,0.75)
to[out=-120,in=120]
(4.2,-2.0);

\draw[
    ->,
    line width=2.8pt,
    black,
    >=stealth
]
(4.25,-3.1)
--
(3.25,-3.1);

\end{tikzpicture}

}

\caption{Overview of the attack setting. A relational database is encoded
into a heterogeneous entity graph following the RDL construction, where
tuples become nodes and PK--FK dependencies become typed edges. The
attacker rewires a small number of foreign-key edges while preserving the
schema's integrity constraints, so that every tuple keeps exactly one
parent, producing a perturbed entity graph that, once passed through the
trained GNN, degrades the downstream prediction. Rewired edges are shown
in red, and the edges they replace are faded.}
\label{fig:db_to_poisoned_prediction}
\end{figure}

Within this setting, two attack directions can be distinguished:
\begin{itemize}
    \item \emph{Feature attacks}, which target the heterogeneous feature
    representations derived from the database and assembled as tensor
    frames. Designing perturbations in this feature space is difficult due
    to the heterogeneous nature of relational attributes.
    \item \emph{Structural attacks}, which target the relational entity
    graph itself, through rewiring operations that remain valid, i.e.,
    that preserve the integrity constraints of the database.
\end{itemize}
We focus on the latter setting, and investigate whether gradient-based
adversarial attacks can identify structural perturbations that degrade the
predictions of relational deep learning systems.

Concretely, we perturb PK--FK relationships while enforcing the integrity
constraints of relational theory, namely uniqueness, inclusion, and
functional dependencies. Each foreign key must reference a valid primary
key and remain attached to exactly one parent tuple, and functional
dependencies must hold throughout the perturbation. Because the graph
topology is entirely induced by these dependencies, perturbing the
database structure is equivalent to rewiring the induced graph; at the
database level, such a rewiring reassigns a tuple to an incorrect but
admissible foreign key, propagating semantic inconsistencies through the
PK--FK structure.

\usetikzlibrary{arrows.meta, positioning, fit, calc, decorations.pathreplacing}

\begin{figure*}[t]
\centering
\resizebox{\textwidth}{!}{%
\begin{tikzpicture}[
    font=\sffamily\bfseries,
    node/.style={circle, draw=#1!70!black, fill=#1!35, minimum size=3mm, inner sep=0pt},
    panel/.style={draw, dashed, rounded corners=1pt, thick, minimum width=3.6cm, minimum height=5.4cm},
    group/.style={draw, dotted, very thick, rounded corners=2pt, inner sep=5pt},
    edge/.style={<->, thick},
    ghost/.style={<->, thick, gray!45, dashed},
    redarrow/.style={<->, red, dashed, very thick},
    mat/.style={draw, thick, fill=white, minimum width=7mm, minimum height=6mm},
    newbox/.style={draw=#1!80!black, dashed, very thick, fill=white, minimum width=1.55cm, minimum height=8mm, align=center}
]

\definecolor{blueRel}{HTML}{005A9C}
\definecolor{greenRel}{HTML}{287A1E}
\definecolor{tealRel}{HTML}{0E5A66}
\definecolor{orangeRel}{HTML}{B76600}
\definecolor{redRel}{HTML}{FF6B6B}
\definecolor{softBlue}{HTML}{7DB7E8}
\definecolor{softGreen}{HTML}{BDE7B3}

% Legend (top strip, clear of all edges)
\begin{scope}[shift={(1.6,3.25)}]
  \draw[edge, blueRel] (0,0) -- (0.6,0);
  \node[anchor=west, font=\large] at (0.7,0) {r1 \& rev r1};
  \draw[edge, greenRel] (3.4,0) -- (4.0,0);
  \node[anchor=west, font=\large] at (4.1,0) {r2 \& rev r2};
  \draw[edge, orangeRel, very thick] (6.8,0) -- (7.4,0);
  \node[anchor=west, font=\large] at (7.5,0) {rewired edge};
  \draw[ghost] (9.9,0) -- (10.5,0);
  \node[anchor=west, font=\large] at (10.6,0) {replaced edge};
\end{scope}

% Panels
\node[panel] at (0,0) {};
\node[panel] at (4.15,0) {};
\node[panel] at (8.75,0) {};
\node[panel] at (13.15,0) {};

% =======================
% Panel 1 : Initial (licit) graph
% =======================
\begin{scope}[xshift=0.4cm, yshift=0.7cm]
\foreach \i in {1,...,4}
  \node[node=softBlue] (a\i) at (-1.1,1.75-0.45*\i) {};
\foreach \i in {1,...,3}
  \node[node=softGreen] (b\i) at (-1.1,-0.75-0.45*\i) {};
\foreach \i in {1,...,3}
  \node[node=redRel] (c\i) at (0.45,0.75-0.75*\i) {};
\node[group, fit=(a1)(a4)] {};
\node[group, fit=(b1)(b3)] {};
\node[group, fit=(c1)(c3)] {};
\draw[edge, blueRel] (c1) -- (a1);
\draw[edge, blueRel] (c2) -- (a2);
\draw[edge, blueRel] (c3) -- (a4);
\draw[edge, greenRel] (c1) -- (b1);
\draw[edge, greenRel] (c2) -- (b2);
\draw[edge, greenRel] (c3) -- (b3);
\end{scope}

\node[align=center, font=\large] at (0,-2.25)
{Initial relational\\entity graph};

% =======================
% Panel 2 : Raw gradients
% =======================
\begin{scope}[yshift=0.45cm]
\foreach \v/\y in {4/1.45,8/0.95,1/0.45}
  \node[mat] at (3.35,\y) {\v};
\node at (4.45,1.35) {$\vdots$};
\node[font=\Large] at (4.45,0.95)
{$\nabla_{S_{r_1}}\ell$};
\node at (4.45,0.55) {$\vdots$};
\node[blueRel, align=center, font=\large]
at (5.25,-0.15) {r1\\rev r1};
\draw[gray, thick, rounded corners=8pt]
(2.85,1.75) -- (2.55,1.75) -- (2.55,0.15) -- (2.85,0.15);
\draw[gray, thick, rounded corners=8pt]
(5.40,1.75) -- (5.70,1.75) -- (5.70,0.15) -- (5.40,0.15);
\foreach \v/\y in {5/-0.85,-1/-1.35,2/-1.85}
  \node[mat] at (3.35,\y) {\v};
\node at (4.45,-0.95) {$\vdots$};
\node[font=\Large] at (4.45,-1.35)
{$\nabla_{S_{r_2}}\ell$};
\node at (4.45,-1.75) {$\vdots$};
\node[greenRel, align=center, font=\large]
at (5.25,-2.15) {r2\\rev r2};
\draw[gray, thick, rounded corners=8pt]
(2.85,-0.55) -- (2.55,-0.55) -- (2.55,-2.15) -- (2.85,-2.15);
\draw[gray, thick, rounded corners=8pt]
(5.40,-0.55) -- (5.70,-0.55) -- (5.70,-2.15) -- (5.40,-2.15);
\end{scope}

\node[align=center, font=\large] at (4.15,-2.45)
{Raw gradients};

% =======================
% Panel 3 : Combined z-score, candidate selection
% =======================
\begin{scope}[xshift=0.35cm,yshift=0.60cm]
\foreach \v/\y in {0.2/1.35,0.8/0.85,0/0.35}
  \node[mat] (zA\v) at (7.65,\y) {\v};
\draw[red, dashed, very thick]
(7.25,0.55) rectangle (8.05,1.10);
\draw[gray, thick, rounded corners=8pt]
(7.15,1.65) -- (6.92,1.65) -- (6.92,0.05) -- (7.15,0.05);
\draw[gray, thick, rounded corners=8pt]
(9.55,1.65) -- (9.78,1.65) -- (9.78,0.05) -- (9.55,0.05);
\node at (8.75,1.52) {$\vdots$};
\node at (8.75,0.48) {$\vdots$};
\node[blueRel, font=\large] at (7.65,1.75)
{$z$-score $r_1$};
\node[
    newbox=blueRel,
    minimum width=1.15cm,
    minimum height=8mm
] (ne1) at (10.55,0.85)
{New\\edge};
\draw[->, very thick, blueRel, dashed]
(8.05,0.82)
to[out=30,in=180]
(ne1.west);
\foreach \v/\y in {0.5/-0.75,0/-1.25,0.2/-1.75}
  \node[mat] at (7.65,\y) {\v};
\draw[red, dashed, very thick]
(7.25,-0.95) rectangle (8.05,-0.50);
\draw[gray, thick, rounded corners=8pt]
(7.15,-0.45) -- (6.92,-0.45) -- (6.92,-2.05) -- (7.15,-2.05);
\draw[gray, thick, rounded corners=8pt]
(9.55,-0.45) -- (9.78,-0.45) -- (9.78,-2.05) -- (9.55,-2.05);
\node at (8.75,-0.58) {$\vdots$};
\node at (8.75,-1.62) {$\vdots$};
\node[greenRel, font=\large] at (7.65,-0.15)
{$z$-score $r_2$};
\node[
    newbox=greenRel,
    minimum width=1.15cm,
    minimum height=8mm
] (ne2) at (10.55,-1.25)
{New\\edge};
\draw[->, very thick, greenRel, dashed]
(8.05,-0.72)
to[out=80,in=150]
(ne2.west);
\end{scope}

\node[align=center, font=\large] at (8.75,-2.00)
{Combined Z Score\\respecting\\constraints};

% =======================
% Panel 4 : Final graph (initial + 2 rewirings, still licit)
% =======================
\begin{scope}[yshift=0.90cm]
\foreach \i in {1,...,4}
  \node[node=softBlue] (fa\i) at (12.5,1.55-0.45*\i) {};
\foreach \i in {1,...,3}
  \node[node=softGreen] (fb\i) at (12.5,-0.75-0.45*\i) {};
\foreach \i in {1,...,3}
  \node[node=redRel] (fc\i) at (14.0,0.75-0.75*\i) {};
\node[group, fit=(fa1)(fa4)] {};
\node[group, fit=(fb1)(fb3)] {};
\node[group, fit=(fc1)(fc3)] {};
\draw[ghost] (fc2) -- (fa2);
\draw[ghost] (fc3) -- (fb3);
\draw[edge, blueRel] (fc1) -- (fa1);
\draw[edge, blueRel] (fc3) -- (fa4);
\draw[edge, greenRel] (fc1) -- (fb1);
\draw[edge, greenRel] (fc2) -- (fb2);
\draw[edge, orangeRel, very thick] (fc2) -- (fa3);
\draw[edge, orangeRel, very thick] (fc3) -- (fb1);
\end{scope}

\draw[->, red, dotted, very thick]
(ne1.east)
to[out=0,in=170]
($(fc2)!0.5!(fa3)$);
\draw[->, red, dotted, very thick]
(ne2.east)
to[out=0,in=190]
($(fc3)!0.5!(fb1)$);

\node[align=center, font=\large] at (13.15,-2.25)
{Final relational\\entity graph};

\end{tikzpicture}%
}
\caption{Gradient-guided constrained rewiring of a relational entity graph.
For each relation $r$ we compute the gradient $\nabla_{S_r}\ell$ of the batch
loss with respect to its adjacency and combine the forward and reverse
contributions into a single candidate score. To make sensitivities comparable
across relations, these scores are normalized per relation; the $z$-score is
shown here as a representative scheme, with the full set of normalizations we
consider summarized in Table~\ref{tab:score_normalization}. The highest-scoring admissible
candidates---those respecting the schema's integrity constraints---are selected
as new edges and applied to the graph. Each selected rewiring replaces a single
foreign-key edge by an admissible one (orange), leaving the degree-one
constraint intact; the replaced edges are shown faded.}
\label{fig:constrained_rewiring}
\end{figure*}

The resulting search problem is combinatorial. Each admissible edit must
simultaneously satisfy the integrity constraints and respect a global
perturbation budget, and the effect of edits is highly non-additive
because message passing in the GNN couples them. Exhaustively evaluating
all admissible rewiring combinations is therefore intractable, and the
space grows with the number of eligible foreign keys and admissible
destinations, which motivates the use of approximate search heuristics.

\paragraph*{Contributions} We investigate seven such heuristic strategies: two random sampling baselines and five gradient-guided variants. The gradient-based variants are derived from a prior work \cite{zugner2018adversarial} which uses differentiable edge masks to estimate the sensitivity of the prediction loss to candidate rewirings and to guide selection. These designs follow approaches that have proven effective
across a variety of combinatorial optimization
problems~\cite{cai2018kbgan, zhang2019data, zhao2024untargeted,
chen2018fast, chen2020mga, ma2022adversarial, zugner2020adversarial,
guo2026graph, ma2020towards, dai2018adversarial}. Empirically,
gradient-based attacks consistently outperform random baselines on
regression tasks. On classification tasks the picture is more nuanced:
gains are smaller, which we attribute to low label-flip rates and the
greater local stability of classification outputs under sparse
perturbations.

\section{Related Work}
%\subsection{Adversarial Learning and Relational Deep Learning}

Adversarial learning has been studied across a broad range of data
modalities, progressively moving from i.i.d.\ data to structured and
relational settings.

\paragraph*{Euclidean Data}
Early works cast \emph{data poisoning} as a bilevel optimization problem,
enabling gradient-based and targeted manipulations of the training
data~\cite{biggio2012poisoning, jagielski2018manipulating}.
Complementarily, \cite{steinhardt2017certified} provide theoretical
guarantees on worst-case robustness under bounded perturbations.

\paragraph*{Graph-Structured Data}
Adversarial attacks on graphs exploit structural dependencies rather than
feature noise alone. Gradient-based methods perturb the adjacency matrix
to degrade model performance~\cite{zugner2018adversarial, chen2018fast,
chen2020mga}, while more realistic settings consider constrained or
black-box attacks, often framed as influence
maximization~\cite{ma2020towards, ma2022adversarial}. A complementary line
of work studies how message passing propagates such perturbations and how
to defend against it~\cite{guo2026graph}.

\paragraph*{Knowledge Graphs and Tabular Data}
In structured domains, attacks adapt to discrete and heterogeneous
representations. For knowledge graphs, adversarial methods target triples
through poisoning or adversarial training~\cite{cai2018kbgan,
zhang2019data, zhao2024untargeted}. In tabular settings, attacks focus on
label corruption and backdoor mechanisms under feature
constraints~\cite{chang2023fast, tajalli2025catback}, with theoretical
foundations in malicious noise models~\cite{kearns1988learning}.

\paragraph*{Relational Deep Learning}
Recent work introduces \emph{Relational Deep Learning} (RDL), which models
multi-table databases as heterogeneous temporal graphs and enables
end-to-end learning via GNNs~\cite{fey2024position}. Benchmarks such as
RelBench show that this paradigm outperforms traditional feature
engineering pipelines~\cite{robinson2024relbench, gu2026relbench},
supported by modular frameworks for heterogeneous data
encoding~\cite{hu2024pytorch}. A growing body of work highlights its open
challenges, including scalability, temporality, and
heterogeneity~\cite{dwivedi2025relational}.

\paragraph*{Learning over Relational Databases}
The database community has long studied machine learning directly over
relational data, exploiting the schema to make learning efficient.
Factorized and in-database learning train models over the join of multiple
tables without materializing it, and notably exploit functional dependencies
to reduce model dimensionality~\cite{schleich2016learning,
abokhamis2018indatabase, olteanu2016factorized}; relational deep learning can
be viewed as the GNN-based successor to this line, replacing closed-form
aggregates with message passing. Integrity constraints---functional and
inclusion dependencies, referential integrity---are foundational to
relational theory~\cite{abiteboul1995foundations}, and a large body of work
relies on them for data cleaning, where constraint-based repair detects and
fixes violations to restore a consistent
database~\cite{rekatsinas2017holoclean}. Our attack is the adversarial
counterpart of this process: rather than repairing inconsistencies, it
searches for integrity-preserving edits that keep the database consistent yet
degrade the downstream model.

\paragraph*{Gap}
Adversarial attacks are by now well understood in the Euclidean, graph,
and tabular settings, but their extension to relational deep learning
remains largely unexplored. Crucially, the techniques developed for
generic graphs do not transfer directly: they assume that edges can be
inserted or deleted freely, whereas in a relational database the topology
is induced by the schema and every edit must preserve its integrity
constraints. The admissible perturbations are therefore not arbitrary
adjacency changes but integrity-preserving rewirings of foreign-key
references, which is the setting we study in this work.

\section{Preliminaries}

\paragraph*{Representing Relations as Heterogeneous Entity Graphs}

We follow the RelBench construction, which represents a relational database
as a \emph{heterogeneous entity graph}: each row of each table becomes a
node, and each foreign-key dependency induces a typed edge (see
Figure~\ref{fig:db_to_poisoned_prediction}).

The node set is not homogeneous. It is partitioned by table type,
$\mathcal V=\bigsqcup_{\tau\in\mathcal T}\mathcal V_\tau$, where $\mathcal T$
is the set of table (node) types, $\mathcal V_\tau$ is the set of tuples
(rows) of table $\tau$, and $\bigsqcup$ denotes a disjoint union. All nodes
thus live in a common vertex set $\mathcal V$ but remain separated into
semantically distinct subsets according to their table type.

We define the heterogeneous graph as
$\mathcal G=(\mathcal V,\{\mathcal E_r\}_{r\in\mathcal R})$, where each
relation type $r\in\mathcal R$ carries its own edge set
$\mathcal E_r\subseteq \mathcal V_{\mathrm{src}(r)}\times
\mathcal V_{\mathrm{dst}(r)}$. For a relation $r$, the types
$\mathrm{src}(r)\in\mathcal T$ and $\mathrm{dst}(r)\in\mathcal T$ denote the
table at which the edge starts and the table at which it ends,
respectively, so that $(i,j)\in\mathcal E_r$ means that tuple $i$ of type
$\mathrm{src}(r)$ is linked to tuple $j$ of type $\mathrm{dst}(r)$.

The connectivity of each relation $r$ is encoded by an adjacency matrix
$S_r\in\{0,1\}^{n_{\mathrm{src}(r)}\times n_{\mathrm{dst}(r)}}$, with
$n_{\mathrm{src}(r)}=|\mathcal V_{\mathrm{src}(r)}|$ and
$n_{\mathrm{dst}(r)}=|\mathcal V_{\mathrm{dst}(r)}|$, whose entries satisfy
$(S_r)_{ij}=1\iff(i,j)\in\mathcal E_r$.

A relational dependency is naturally directed: a foreign key points toward
the referenced primary key. In practice (in RelBench and PyG
implementations) each such dependency is duplicated into two edge types, so
that message passing can propagate in both directions.

\paragraph*{Node Features}
Each tuple carries tabular attributes (categorical, numerical, text,
timestamp, etc.), which are independently embedded and merged into a single
dense per-tuple representation following the tensor-frame paradigm:
$X=\{X_\tau\}_{\tau\in\mathcal T}$, with
$X_\tau\in\mathbb R^{|\mathcal V_\tau|\times d_\tau}$. A heterogeneous GNN
$f_\theta$ then predicts from both the features and the typed relations,
$\hat y=f_\theta(X,\{S_r\}_{r\in\mathcal R})$.
% TODO: add citation for heterogeneous GNNs / GraphSAGE

\section{Constrained Adversarial Attacks on Relational Graphs}

\subsection{Preserving Foreign-Key Integrity}

\paragraph{Relational graph semantics and perturbation constraints.}
Foreign-key to primary-key (FK--PK) dependencies form the structural
backbone of the relational graph: they connect tuples from different tables
wherever the schema defines a referential constraint. The graph topology is
therefore induced directly by the referential structure of the database.

Each foreign-key tuple references exactly one parent tuple. Admissible
attacks consequently cannot insert or delete edges freely; perturbations are
restricted to \emph{rewiring operations}, in which the parent referenced by
a foreign key is replaced by another admissible primary-key node (and the
symmetric reverse edge is updated accordingly), preserving the degree-one
constraint. Formally, a perturbation replaces an edge
$(v_{\mathrm{FK}}, v_{\mathrm{PK}}^{\mathrm{old}})$ by
$(v_{\mathrm{FK}}, v_{\mathrm{PK}}^{\mathrm{new}})$, so at most one rewiring
can be applied to a given FK node per attack step.

\paragraph{Budget.}
Attacks are performed under a global perturbation budget $B$, the maximum
number of rewiring operations allowed during a single attack. The budget is
distributed across the relations of the heterogeneous graph, so that
perturbations may affect several edge types while remaining sparse and
semantically valid.

\paragraph{Preserving functional dependencies.}
To keep the database consistent under perturbation, we distinguish between
\emph{mutable} and \emph{immutable} dependencies. A dependency is said to be
\emph{mutable} if its value can be modified by the attack while preserving
database integrity. Mutable dependencies are further divided into two classes:
\emph{coupled rewiring} and \emph{local rewiring}. In contrast,
\emph{immutable dependencies} cannot be modified because they encode entity
identities or protected structural constraints. We describe them next.

%To keep the database consistent under perturbation, we distinguish three
%classes of admissible rewiring operations: \emph{local rewiring},
%\emph{coupled rewiring}, and \emph{immutable dependencies}.

\textbf{Local rewiring} acts independently on a single FK--PK dependency: a
foreign key is reassigned to another admissible parent while preserving
referential integrity and all degree constraints imposed by the schema.

\textbf{Coupled rewiring} concerns groups of foreign keys whose semantics are
jointly constrained, where editing one dependency in isolation could yield an
incoherent configuration. Such foreign keys must be perturbed simultaneously,
as a single atomic operation.

\textbf{Immutable} dependencies are globally sensitive or structurally
critical relations whose modification could propagate large-scale
inconsistencies through the database; these are excluded from the attack
space. 

\paragraph{Overview of the threat model.} In short, the proposed threat model seeks adversarial rewirings while preserving the integrity of the relational database. No tuples are inserted or removed; instead, the attack is restricted to modifying foreign-key references. These references are rewired such that their new values remain valid with respect to the corresponding attribute domains and integrity constraints. As a result, the size of the database remains unchanged throughout the attack.

\subsection{Why We Need Heuristics}

We first ask what it would cost to find the best possible attack. Let $C$ be
the total number of legal rewiring candidates across all relations, assuming
the full set of admissible FK--PK perturbations is known in advance. Under a
budget $B$, the attacker must select $B$ of these candidates, and even in
this simplified view the number of possible attacks already grows
combinatorially as $\binom{C}{B}$.

This estimate still understates the true difficulty, because rewiring
operations are not independent. Several relations encode the same information
through forward and reverse edges, so a perturbation on one relation may
constrain or invalidate perturbations on another; moreover, structurally
coupled foreign keys must be modified jointly to preserve integrity. These
dependencies add further combinatorial constraints, making the search for an
optimal perturbation considerably harder than a plain subset-selection
problem and motivating the use of approximate heuristics.

\section{White-Box Gradient-Based Attack} \label{sec:method}

\subsection{Attack Objective}

We assume a trained heterogeneous GNN
$\hat y = f_\theta(X,\{S_r\}_{r\in\mathcal R})$, in which the node features
$X$ are fixed and only the relational structures $\{S_r\}_{r\in\mathcal R}$
may be modified. The attacker seeks perturbed relation matrices
$\{\widetilde S_r\}_{r\in\mathcal R}$ that worsen the prediction while
respecting the edit budget and the integrity constraints introduced above:
\[
\max_{\{\widetilde S_r\}}
\;
\ell_{\mathcal B}
\!\left(
f_\theta(X,\{\widetilde S_r\})_{\mathcal B},
y_{\mathcal B}
\right),
\]
where $\mathcal B$ is the mini-batch under consideration and
$\ell_{\mathcal B}$ is the loss evaluated on its labels $y_{\mathcal B}$.
Unlike targeted attacks, no particular subset of nodes is singled out: the
objective is to apply as few valid structural edits as possible while
maximally increasing the loss over the selected entities.

\subsection{Attackable Model}

\paragraph{The model.}
The original GNN is not directly amenable to structural attacks: its edge
index matrices carry no gradient, since they are purely combinatorial
representations of the graph structure. To recover a gradient signal over the
relations, we define an alternative \emph{attackable} model. The parameters
of the pretrained GNN are transferred into this model and frozen, and
differentiable edge weights are introduced for each relation, allowing
gradients to flow through the graph structure during backpropagation.

\paragraph{Masks and sparsity.}
Because the pipeline is implemented in PyTorch Geometric, graph structures are
stored as sparse edge-index tensors, so gradients can only propagate through
edges that already exist. To expose unseen rewiring candidates to the
gradient, we augment the graph with a set of admissible candidate edges
sampled under the semantic and structural constraints defined above:
existing edges receive weight $1$, candidate edges receive an initial weight
of $0$, and a differentiable mask over these weights lets gradients propagate
through both the original topology and the candidate edges.

\subsection{Gradient-Based Rewiring}

\paragraph{First-order candidate scoring.}
The mask gives us, for each relation, a first-order signal: entries with
large gradient magnitude are the structural edits to which the loss is most
sensitive, which is exactly the direction a first-order (gradient-sign)
attack would follow. We use this signal only to \emph{rank} candidate
rewirings; the formal first-order derivation is given in
Appendix~\ref{proof:FGSM}. Concretely, let
\[
G_r = \nabla_{S_r}\,\ell_{\mathrm{batch}}(S_r)
\]
denote the gradient of the batch loss with respect to the direct relation
$r$, and let
\[
G_{r^{-1}} = \nabla_{S_{r^{-1}}}\,\ell_{\mathrm{batch}}(S_{r^{-1}})
\]
be the gradient of its reverse relation. Since each rewiring simultaneously
modifies the forward FK$\rightarrow$PK edge and its reverse edge, candidate
sensitivity is measured by the combined score
\[
(G_r)_{f,p} + (G_{r^{-1}})_{p,f}.
\]
For a source node $f$, large positive values of this score indicate
promising rewiring candidates.

%%%%%%%%%%%%%%%%%%%%%%%%%%%%%%
\begin{table*}[t]
\centering
\caption{
Gradient-based candidate scoring and normalization strategies.
For all methods, $G_r$ and $G_{r^{-1}}$ denote the gradients associated with
the forward and reverse relations, respectively, $\mu_r$ and $\sigma_r$
denote the mean and standard deviation of candidate scores for relation $r$,
$m_r$ is the median, $\mathrm{MAD}_r$ the median absolute deviation, and
$s_r^{\min}$ and $s_r^{\max}$ the minimum and maximum candidate scores
observed for relation $r$.
}
\label{tab:score_normalization}

\small
\begin{tabular}{
>{\centering\arraybackslash}m{2.3cm}
>{\centering\arraybackslash}m{6.2cm}
>{\centering\arraybackslash}m{5.5cm}
}
\toprule
\textbf{Method} & \textbf{Definition} & \textbf{Purpose} \\
\midrule

Raw Gradient
&
$\displaystyle
\mathrm{score}_{\mathrm{raw}}(f,p)
=
(G_r)_{f,p}
+
(G_{r^{-1}})_{p,f}
$
&
Direct first-order sensitivity estimate. Preserves absolute gradient magnitude.
\\[2pt]
\cmidrule(lr){1-3}

Z-score
&
$\displaystyle
\mathrm{score}_{z}(f,p)
=
\frac{\mathrm{score}_{\mathrm{raw}}(f,p)-\mu_r}
{\sigma_r+\varepsilon}
$
&
Removes relation-specific scale effects using the mean and standard deviation.
\\[2pt]
\cmidrule(lr){1-3}

Robust Z-score
&
$\displaystyle
\mathrm{score}_{rz}(f,p)
=
\frac{\mathrm{score}_{\mathrm{raw}}(f,p)-m_r}
{\mathrm{MAD}_r+\varepsilon}
$
&
More robust to outliers and heavy-tailed gradient distributions.
\\[2pt]
\cmidrule(lr){1-3}

Min-Max
&
$\displaystyle
\mathrm{score}_{mm}(f,p)
=
\frac{\mathrm{score}_{\mathrm{raw}}(f,p)-s_r^{\min}}
{s_r^{\max}-s_r^{\min}+\varepsilon}
$
&
Places all candidate scores on a common $[0,1]$ scale within each relation,
facilitating comparisons across relations with different gradient ranges.
\\

\bottomrule
\end{tabular}
\end{table*}

%%%%%%%%%%%%%%%%%%%%%%%%%%%%%

Exhaustively testing all destinations is too costly: with $N$ eligible
source nodes and $C$ candidates per node, the prototype requires
$\mathcal{O}(NC)$ forward passes, so we restrict the search to a small
candidate set. 

Moreover, raw gradient magnitudes can vary substantially across source nodes and relations, making direct comparison difficult. 
To reduce this bias, we consider several normalization strategies applied to the combined forward--reverse sensitivity score.
Their definitions and intended effects are summarized in Table~\ref{tab:score_normalization}.

% Moreover, raw gradient magnitudes vary widely across source
% nodes and relations, which biases a direct comparison. To make sensitivities
% comparable, we standardize the combined forward--reverse score with a
% per-relation $z$-score:
% \[
% \mathrm{score}_{\mathrm{raw}}(f,p)
% =
% (G_r)_{f,p} + (G_{r^{-1}})_{p,f},
% \]
% \[
% \operatorname{score}_z(f,p)
% =
% \frac{\mathrm{score}_{\mathrm{raw}}(f,p)-\mu_r}{\sigma_r+\varepsilon}.
% \]
% where $\mu_r$ and $\sigma_r$ are the mean and standard deviation of the
% candidate scores of relation $r$, and $\varepsilon$ is a small constant for
% numerical stability. This places rewiring sensitivities on a common scale,
% both within and across relations.

\paragraph{Candidate set.}
For each eligible source node $f$ with current parent $p_{\mathrm{old}}(f)$,
the set of admissible candidates for $r\in\mathcal R$ is
\[
\mathcal C_r(f)
=
\Bigl\{
p \in \mathcal V_{\mathrm{dst}(r)}
\;:\;
p \neq p_{\mathrm{old}}^{(r)}(f)
\Bigr\}.
\]
Given these candidates, we define two strategies for selecting both the edges
to perturb and their new destinations; in both, perturbations are optimized
jointly across all relations $r\in\mathcal R$.

\paragraph{Direct global gradient selection.}
The first strategy ranks all admissible candidates
$\mathcal C_r(f)$ ($\forall r\in\mathcal R$) by their first-order score
$\operatorname{score}(f,p)$ and keeps the top-$B$ globally:
\[
\mathcal A^\star
=
\operatorname{TopB}_{r\in\mathcal R,\;f,\;p\in\mathcal C_r(f)}
\operatorname{score}(f,p).
\]
This yields a fully gradient-driven attack with no explicit forward
re-evaluation.

\paragraph{Shortlist-based reranking.}
The second strategy adds an intermediate shortlist stage to reduce the number
of explicit forward evaluations. For each source node $f$ and relation $r$,
only the top-$K$ gradient candidates are kept, focusing the search on the most promising destinations:
\[
\mathcal C_{K}^{(r)}(f)
=
\operatorname{TopK}_{p\in\mathcal C_r(f)}
(G_r)_{f,p},
\qquad
K \ll |\mathcal C_r(f)|,
\]
For each candidate $p \in \mathcal C_{K}^{(r)}(f)$ we then evaluate the exact loss variation induced by the rewiring,
\[
\begin{aligned}
\Delta \ell_r(f,p)
={}&
\ell_{\mathrm{batch}}
\Bigl(
\{S_u\}_{u\in\mathcal R \setminus \{r,r^{-1}\}}
\cup
S_r^{(f\to p)}
\cup
S_{r^{-1}}^{(p\to f)}
\Bigr) \\
&-
\ell_{\mathrm{batch}}
\bigl(
\{S_u\}_{u\in\mathcal R}
\bigr),
\end{aligned}
\]
where
\[
\{S_u\}_{u\in\mathcal R \setminus \{r,r^{-1}\}}
\cup
S_r^{(f\to p)}
\cup
S_{r^{-1}}^{(p\to f)}
\]
denotes the adjacency matrices obtained after replacing the edge
$(f,p_{\mathrm{old}}^{(r)}(f))$ by $(f,p)$ in the direct relation $r$,
together with its symmetric counterpart
$(p_{\mathrm{old}}^{(r)}(f),f)\mapsto(p,f)$ in the reverse relation
$r^{-1}$. The best local rewiring for node $f$ is
\[
p^\star_r(f)
\in
\arg\max_{p\in\mathcal C_{K}^{(r)}(f)}
\Delta \ell_r(f,p),
\]
and the locally optimal candidates are aggregated across relations and
ranked by their achieved loss increase, keeping the top-$B$:
\[
\mathcal A^\star
=
\operatorname{TopB}_{r\in\mathcal R,\;f}
\Delta \ell_r\bigl(f,p_r^\star(f)\bigr).
\]
The final perturbation set thus satisfies the global budget while allowing
coordinated changes across multiple relations.

\paragraph{Note on the non-additivity of structural perturbations.}
The effect of structural perturbations is inherently non-additive: the impact
of a rewiring on the loss can depend strongly on which other perturbations are
applied simultaneously. Two edits that each increase the loss in isolation may,
when combined, partially cancel or even reverse their joint effect on the
prediction.
\section{Experiments}

\subsection{Experimental Setup}

All experiments were conducted on a CPU-only virtual machine with $96$ cores
and $396$\,GB of RAM. Our implementation builds on the Relational Deep
Learning (RDL) framework of RelBench, itself implemented on top of PyTorch
Geometric for heterogeneous graph learning. 
%All datasets and code are available at \url{https://relbench.stanford.edu/}.

\subsection{Dataset}

We deliberately selected a dataset with a rich, highly structured relational
schema rather than collections of a few isolated tables, so that we can
study how adversarial perturbations propagate across relations. The
attack evaluation reported below focuses on \texttt{rel-f1}, which contains multiple interconnected entity tables, interaction tables,
foreign keys in different tables pointing to the same primary key (denoted as \emph{coupled}), and hierarchical dependencies, making it 
representative of realistic relational systems. An overview of the dataset
and its prediction tasks is given in Table~\ref{tab:datasets_overview},
and the structural constraints and dependency-preserving policies enforced
during rewiring are summarized in Table~\ref{tab:all_schema_constraints}. 

% ---------------------------------------------------------------------------
% Table 1: datasets overview
% ---------------------------------------------------------------------------
\begin{table*}[t]
\centering
\caption{Overview of the RelBench \texttt{rel-f1} and prediction tasks considered in this work.}
\label{tab:datasets_overview}
\renewcommand{\arraystretch}{1.15}
\begin{tabular}{c c c c c c c}
\toprule
Dataset & \# Tables & \# Rows & \# Columns & \# Tasks & Task name & Task type \\
\midrule

\multirow{4}{*}{\texttt{rel-f1}}
& \multirow{4}{*}{9}
& \multirow{4}{*}{74,063}
& \multirow{4}{*}{67}
& \multirow{4}{*}{4}
& \texttt{driver-position} & Regression \\

&&&&
& \texttt{qualifying-position} & Regression \\

&&&&
& \texttt{driver-dnf} & Binary classification \\

&&&&
& \texttt{driver-top3} & Binary classification \\

% \midrule

% \multirow{5}{*}{\texttt{rel-ratebeer}}
% & \multirow{5}{*}{13}
% & \multirow{5}{*}{13,787,005}
% & \multirow{5}{*}{221}
% & \multirow{5}{*}{5}
% & \texttt{beer-churn} & Binary classification \\

% &&&&
% & \texttt{user-churn} & Binary classification \\

% &&&&
% & \texttt{brewer-dormant} & Binary classification \\

% &&&&
% & \texttt{user-count} & Regression \\

% &&&&
% & \texttt{beer\_ratings-total\_score} & Regression \\

% \midrule

% \multirow{4}{*}{\texttt{rel-stack}}
% & \multirow{4}{*}{7}
% & \multirow{4}{*}{4,247,264}
% & \multirow{4}{*}{52}
% & \multirow{4}{*}{4}
% & \texttt{user-engagement} & Binary classification \\

% &&&&
% & \texttt{user-badge} & Binary classification \\

% &&&&
% & \texttt{badges-class} & Multiclass classification \\

% &&&&
% & \texttt{post-votes} & Regression \\

\bottomrule
\end{tabular}
\end{table*}

% ---------------------------------------------------------------------------
% Table 5: schema constraints and attack policies
% ---------------------------------------------------------------------------
\begin{table*}[t]
\centering
\caption{Schema constraints and attack policies for the \texttt{rel-f1} dataset.}
\label{tab:all_schema_constraints}
\small
\begin{tabular}{p{5.2cm}p{3.6cm}p{4.6cm}p{1.4cm}}
\toprule
Relation / Foreign Key & Description & Constraint type & Mutable \\
\midrule

\multicolumn{4}{c}{\textbf{Formula 1 (\texttt{rel-f1})}} \\
\midrule

\texttt{results.raceId}
& Race participation result
& Coupled with \texttt{driverId}, \texttt{constructorId}
& Yes \\

\texttt{results.driverId}
& Driver participation result
& Coupled with \texttt{raceId}, \texttt{constructorId}
& Yes \\

\texttt{results.constructorId}
& Constructor participation result
& Coupled with \texttt{raceId}, \texttt{driverId}
& Yes \\

\texttt{qualifying.raceId}
& Qualifying session assignment
& Coupled with \texttt{driverId}, \texttt{constructorId}
& Yes \\

\texttt{qualifying.driverId}
& Driver qualifying participation
& Coupled with \texttt{raceId}, \texttt{constructorId}
& Yes \\

\texttt{qualifying.constructorId}
& Constructor qualifying participation
& Coupled with \texttt{raceId}, \texttt{driverId}
& Yes \\

\texttt{standings.raceId}
& Driver standings snapshot
& Coupled with \texttt{driverId}
& Yes \\

\texttt{standings.driverId}
& Driver standings snapshot
& Coupled with \texttt{raceId}
& Yes \\

\texttt{constructor\_standings.raceId}
& Constructor standings snapshot
& Coupled with \texttt{constructorId}
& Yes \\

\texttt{constructor\_standings.constructorId}
& Constructor standings snapshot
& Coupled with \texttt{raceId}
& Yes \\

\texttt{constructor\_results.raceId}
& Constructor race result
& Coupled with \texttt{constructorId}
& Yes \\

\texttt{constructor\_results.constructorId}
& Constructor race result
& Coupled with \texttt{raceId}
& Yes \\

\texttt{races.circuitId}
& Race $\rightarrow$ circuit assignment
& Structural FK (sensitive)
& Restricted \\

\texttt{drivers.driverId}, \texttt{constructors.constructorId},
\texttt{circuits.circuitId}
& Core entity identifiers
& Immutable PKs
& No \\

\bottomrule
\end{tabular}
\end{table*}

\subsection{Protocol}

\paragraph{Model architecture.}
We use a \emph{heterogeneous GraphSAGE} architecture with two
message-passing layers. A shallow architecture is important in our setting:
it limits both \emph{vanishing gradients} (which would weaken the attack
signal) and \emph{oversmoothing}. All hidden representations use a channel 
dimension of $128$ or $16$, depending on the task (see Table~\ref{tab:hyperparams} in the Appendix). For regression tasks the prediction head is left
\emph{linear}, while for classification we apply a \emph{softmax} over the
output logits. The attacked objective is task-dependent: \emph{Mean Absolute
Error} (MAE) for regression, and \emph{Binary} or standard \emph{Cross-Entropy}
for classification, depending on the task.

\paragraph{Mini-batch sampling.}
Each mini-batch is generated by \emph{temporal neighborhood sampling} with at
most $128$ neighbors per hop and a batch size of $512$. All reported attacks
operate on a fixed mini-batch sampled from \texttt{rel-f1}; the trained model
is frozen, so no parameter is updated during the attack and only the graph
structure of the batch is modified.

\paragraph{Perturbation budget.}
We focus on highly localized attacks under small budgets, i.e., adversarial
rewiring that modifies between $1$ and $100$ edges.

\paragraph{Attack strategies.}
We compare seven attack strategies that are extensions of the ones defined in Section~ \ref{sec:method}.
\begin{enumerate}

\item \textbf{Random.} Admissible perturbations are sampled uniformly at random among all valid
rewirings, while respecting the relational schema constraints. 

\item \textbf{Random\,+\,Exact.} A set of admissible perturbations is first sampled uniformly at random.
The sampled candidates are subsequently re-ranked through exact forward
evaluation on the attacked model, and the perturbation producing the
largest degradation is selected. 

\item \textbf{Gradient.}
All admissible perturbations are scored using first-order sensitivity
information obtained through backpropagation. Candidates are ranked
according to their raw gradient-based scores and selected directly without
additional evaluation. 

\item \textbf{Gradient\,+\,Exact.}
Gradient scores are first used to rank all admissible perturbations.
The resulting candidates are then re-evaluated through exact forward
passes on the attacked model, allowing us to assess how accurately the
first-order approximation predicts the true attack effect. 

\item \textbf{Gradient\,+\,Z-score.}
Instead of using raw gradient scores, candidate perturbations are
normalized using a per-relation Z-score transformation. This reduces the
impact of scale differences across relations and encourages a more diverse
selection of perturbations before applying the attack. 

\item \textbf{Gradient\,+\,Robust Z-score.}
Candidates are normalized using a robust Z-score based on the median and
median absolute deviation (MAD). This strategy is less sensitive to
heavy-tailed gradient distributions and outliers, providing a more stable
ranking of perturbations across heterogeneous relations.

\item \textbf{Gradient\,+\,Min-Max.}
Candidates are normalized independently within each relation using min--max
scaling, mapping the lowest gradient score to $0$ and the highest to $1$.
The objective is to reduce the dominance of relations that naturally produce
larger gradient magnitudes and to make perturbation scores comparable across
relations.
\end{enumerate}

% \paragraph{Attack strategies.}
% We compare four strategies.
% \textbf{(1) Gradient-based.} Candidate perturbations are ranked directly by
% the first-order sensitivity scores obtained through backpropagation. Because
% the sampled mini-batches are small, we can evaluate \emph{all} admissible
% candidates during gradient computation without aggressive pruning.
% \textbf{(2) Gradient\,+\,shortlist.} Gradient scoring is followed by a
% \emph{shortlist} stage: only the top-$200$ candidates are kept before an exact
% forward re-evaluation, letting us test whether restricting the search to the
% most promising gradient directions improves the attack.
% \textbf{(3) Constrained random.} Perturbations are sampled uniformly at random
% among admissible rewirings, respecting all schema constraints (foreign-key
% validity, immutable primary keys, coupled dependencies, and relation
% compatibility).
% \textbf{(4) Random\,+\,shortlist.} A pool of $200$ random admissible
% perturbations is generated and then exactly re-evaluated, isolating the
% contribution of the gradient signal from the mere effect of evaluating a
% restricted candidate subset.

\begin{figure}
    \centering
    \includegraphics[width=1\linewidth]{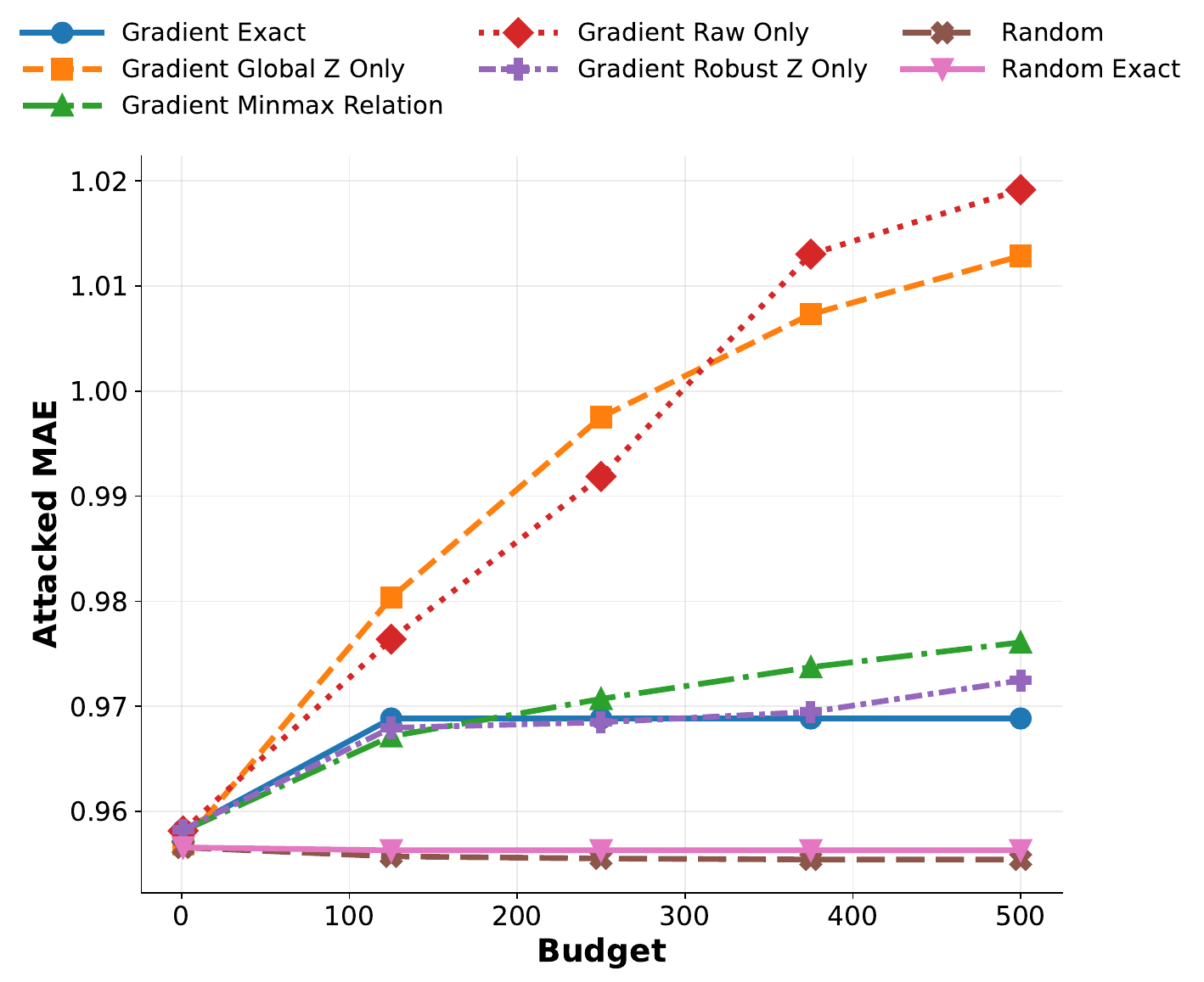}
    \caption{Attacked MAE on the \texttt{qualifying-position} task of the \texttt{rel-f1} dataset as a function of the attack budget (1--500 edge perturbations). Higher values correspond to stronger attacks. Results are reported for random seed 41.}
    \label{fig:non_additivity}
\end{figure}

% ---------------------------------------------------------------------------
% Main results table
% ---------------------------------------------------------------------------
\begin{table*}[t]
\centering
\caption{
Performance under structural attacks on the \texttt{rel-f1} dataset.
For classification tasks (\texttt{driver-dnf}, \texttt{driver-top3}), results are reported as attacked accuracy (\%) and lower values indicate stronger attacks.
For regression tasks (\texttt{driver-position}, \texttt{qualifying-position}), results are reported as attacked MAE and higher values indicate stronger attacks.
Best method for each budget is shown in bold, while the second best is underlined. Results are averaged over five independent random seeds. %that are reported in the Appendix (Table~\ref{app:hyperparameters}).
}
\label{tab:relf1_all_results}
\resizebox{\textwidth}{!}{%
\begin{tabular}{llccccc}
\toprule
\textbf{Task} & \textbf{Method}
& \textbf{$B=1$}
& \textbf{$B=25$}
& \textbf{$B=50$}
& \textbf{$B=75$}
& \textbf{$B=100$} \\
\midrule

\multirow{6}{*}{\texttt{driver-dnf}}
& Random
& \underline{77.37 $\pm$ 0.58}
& 77.37 $\pm$ 0.58
& 77.37 $\pm$ 0.58
& 77.37 $\pm$ 0.58
& 77.37 $\pm$ 0.58 \\

& Random + exact rerank
& \textbf{76.34 $\pm$ 0.59}
& \textbf{76.24 $\pm$ 0.45}
& \textbf{76.24 $\pm$ 0.45}
& \underline{76.24 $\pm$ 0.45}
& \underline{76.24 $\pm$ 0.45} \\

& Gradient raw
& \underline{77.37 $\pm$ 0.58}
& 77.37 $\pm$ 0.58
& 77.37 $\pm$ 0.58
& 76.54 $\pm$ 0.86
& 76.54 $\pm$ 0.86 \\

& Gradient z-score
& \underline{77.37 $\pm$ 0.58}
& 77.37 $\pm$ 0.58
& \underline{76.44 $\pm$ 0.73}
& \textbf{75.51 $\pm$ 2.04}
& \textbf{75.51 $\pm$ 2.04} \\

& Gradient robust z-score
& \underline{77.37 $\pm$ 0.58}
& \underline{76.44 $\pm$ 0.73}
& \underline{76.44 $\pm$ 0.73}
& 76.54 $\pm$ 0.86
& 76.54 $\pm$ 0.86 \\

& Gradient + exact rerank
& \underline{77.37 $\pm$ 0.58}
& 77.37 $\pm$ 0.58
& \underline{76.44 $\pm$ 0.73}
& \textbf{75.51 $\pm$ 2.04}
& \textbf{75.51 $\pm$ 2.04} \\

\midrule

\multirow{6}{*}{\texttt{driver-top3}}
& Random
& 77.63 $\pm$ 3.72
& \underline{77.63 $\pm$ 3.72}
& \underline{77.53 $\pm$ 3.86}
& \underline{77.53 $\pm$ 3.86}
& \underline{77.53 $\pm$ 3.86} \\

& Random + exact rerank
& 77.63 $\pm$ 3.72
& \underline{77.63 $\pm$ 3.72}
& \textbf{76.97 $\pm$ 2.79}
& \textbf{76.97 $\pm$ 2.79}
& \underline{76.88 $\pm$ 2.90} \\

& Gradient raw
& 77.63 $\pm$ 3.72
& \textbf{76.97 $\pm$ 2.79}
& \textbf{76.97 $\pm$ 2.79}
& \textbf{76.97 $\pm$ 2.79}
& \textbf{76.32 $\pm$ 3.28} \\

& Gradient z-score
& 77.63 $\pm$ 3.72
& \textbf{76.97 $\pm$ 2.79}
& \textbf{76.97 $\pm$ 2.79}
& \textbf{76.97 $\pm$ 2.79}
& \underline{76.97 $\pm$ 2.79} \\

& Gradient robust z-score
& 77.63 $\pm$ 3.72
& \textbf{76.97 $\pm$ 2.79}
& \underline{77.53 $\pm$ 3.86}
& \underline{77.53 $\pm$ 3.86}
& 77.53 $\pm$ 3.86 \\

& Gradient + exact rerank
& 77.63 $\pm$ 3.72
& \textbf{76.97 $\pm$ 2.79}
& \textbf{76.97 $\pm$ 2.79}
& \textbf{76.97 $\pm$ 2.79}
& \underline{76.97 $\pm$ 2.79} \\

\midrule
\multirow{7}{*}{\texttt{driver-position}}
& Random
& 3.1448 $\pm$ 0.0101
& 3.1443 $\pm$ 0.0101
& 3.1443 $\pm$ 0.0101
& 3.1443 $\pm$ 0.0101
& 3.1443 $\pm$ 0.0101 \\

& Random + exact rerank
& 3.1450 $\pm$ 0.0102
& 3.1448 $\pm$ 0.0103
& 3.1448 $\pm$ 0.0103
& 3.1448 $\pm$ 0.0103
& 3.1448 $\pm$ 0.0103 \\

& Gradient raw
& \underline{3.1552 $\pm$ 0.0119}
& \textbf{3.1677 $\pm$ 0.0162}
& \textbf{3.1744 $\pm$ 0.0210}
& \textbf{3.1777 $\pm$ 0.0219}
& \textbf{3.1791 $\pm$ 0.0233} \\

& Gradient z-score
& 3.1527 $\pm$ 0.0126
& \underline{3.1662 $\pm$ 0.0114}
& \underline{3.1694 $\pm$ 0.0154}
& \underline{3.1735 $\pm$ 0.0171}
& \underline{3.1775 $\pm$ 0.0200} \\

& Gradient robust z-score
& \textbf{3.1552 $\pm$ 0.0118}
& 3.1589 $\pm$ 0.0141
& 3.1606 $\pm$ 0.0164
& 3.1655 $\pm$ 0.0191
& 3.1671 $\pm$ 0.0190 \\

& Gradient min-max relation
& \underline{3.1552 $\pm$ 0.0119}
& 3.1611 $\pm$ 0.0120
& 3.1665 $\pm$ 0.0093
& 3.1666 $\pm$ 0.0112
& 3.1679 $\pm$ 0.0106 \\

& Gradient + exact rerank
& 3.1528 $\pm$ 0.0126
& 3.1568 $\pm$ 0.0102
& 3.1568 $\pm$ 0.0102
& 3.1568 $\pm$ 0.0102
& 3.1568 $\pm$ 0.0102 \\
\midrule

\multirow{7}{*}{\texttt{qualifying-position}}
& Random
& 0.9941 $\pm$ 0.0531
& 0.9938 $\pm$ 0.0529
& 0.9935 $\pm$ 0.0529
& 0.9935 $\pm$ 0.0529
& 0.9935 $\pm$ 0.0529 \\

& Random + exact rerank
& 0.9941 $\pm$ 0.0531
& 0.9940 $\pm$ 0.0530
& 0.9940 $\pm$ 0.0530
& 0.9940 $\pm$ 0.0530
& 0.9940 $\pm$ 0.0530 \\

& Gradient raw
& 0.9949 $\pm$ 0.0527
& \underline{1.0011 $\pm$ 0.0591}
& \underline{1.0074 $\pm$ 0.0699}
& \underline{1.0101 $\pm$ 0.0739}
& \textbf{1.0127 $\pm$ 0.0781} \\

& Gradient z-score
& 0.9949 $\pm$ 0.0527
& \textbf{1.0019 $\pm$ 0.0591}
& \textbf{1.0088 $\pm$ 0.0701}
& \textbf{1.0101 $\pm$ 0.0708}
& \underline{1.0112 $\pm$ 0.0712} \\

& Gradient robust z-score
& \underline{0.9951 $\pm$ 0.0530}
& 0.9995 $\pm$ 0.0528
& 1.0006 $\pm$ 0.0527
& 1.0010 $\pm$ 0.0529
& 1.0013 $\pm$ 0.0529 \\

& Gradient min-max relation
& 0.9949 $\pm$ 0.0527
& 0.9985 $\pm$ 0.0533
& 0.9994 $\pm$ 0.0525
& 0.9991 $\pm$ 0.0513
& 0.9995 $\pm$ 0.0517 \\

& Gradient + exact rerank
& \textbf{0.9953 $\pm$ 0.0533}
& 0.9970 $\pm$ 0.0523
& 0.9970 $\pm$ 0.0523
& 0.9970 $\pm$ 0.0523
& 0.9970 $\pm$ 0.0523 \\
\bottomrule
\end{tabular}
}
\end{table*}

\subsection{Results}

\paragraph{Attack effectiveness.}
The main results are reported in Table~\ref{tab:relf1_all_results}. The
clearest trend appears on the regression tasks: gradient-based attacks raise
the attacked MAE monotonically with the budget, while the random baselines
leave the error essentially unchanged. On \texttt{driver-position}, for
instance, the pure gradient attack increases the MAE from $3.155$ at $B{=}1$ to
$3.179$ at $B{=}100$, whereas constrained random sampling stays flat around
$3.144$; the same pattern holds on \texttt{qualifying-position}
($0.995 \rightarrow 1.013$ versus a flat $0.994$). Importantly, these aggregate
numbers are conditioned by the fact that some trained models are not perfectly
fitted, which can dampen the apparent effect of the perturbations. For a
better-fitted model, Figure~\ref{fig:non_additivity} shows that the attack can induce a relative
degradation of up to roughly $7\%$ with respect to the clean value. This
confirms that the first-order signal identifies structural edits to which the
regression output is clearly sensitive, whereas random rewiring does not.

On the classification tasks (\texttt{driver-dnf}, \texttt{driver-top3}) the
effect is markedly smaller: even the strongest attacks reduce accuracy by only
a couple of points (e.g.\ \texttt{driver-dnf} from $77.37$ to $75.51$), and the
ranking between methods is noisier. We attribute this to a low
\emph{label-flip rate}: decision boundaries make classification outputs locally
stable, so a small, integrity-preserving rewiring rarely changes the predicted
class even when it perturbs the underlying logits. The occasional non-monotonic
behavior of some classification rows is consistent with the non-additivity of
structural perturbations discussed earlier: edits that help individually may
partially cancel once combined.
% \paragraph{Attack effectiveness.}
% The main results are reported in Table~\ref{tab:relf1_all_results}. The
% clearest trend appears on the regression tasks: gradient-based attacks raise
% the attacked MAE monotonically with the budget, while the random baselines
% leave the error essentially unchanged. On \texttt{driver-position}, for
% instance, the pure gradient attack increases the MAE from $3.155$ at $B{=}1$ to
% $3.179$ at $B{=}100$, whereas constrained random sampling stays flat around
% $3.144$; the same pattern holds on \texttt{qualifying-position}
% ($0.995 \rightarrow 1.013$ versus a flat $0.994$). This confirms that the
% first-order signal identifies structural edits to which the regression output
% is genuinely sensitive, which random rewiring cannot. On the classification
% tasks (\texttt{driver-dnf}, \texttt{driver-top3}) the effect is markedly
% smaller: even the strongest attacks reduce accuracy by only a couple of points
% (e.g.\ \texttt{driver-dnf} from $77.37$ to $75.51$), and the ranking between
% methods is noisier. We attribute this to a low \emph{label-flip rate}: decision
% boundaries make classification outputs locally stable, so a small,
% integrity-preserving rewiring rarely changes the predicted class even when it
% perturbs the underlying logits. The occasional non-monotonic behavior of some
% classification rows is consistent with the non-additivity of structural
% perturbations discussed earlier: edits that help individually may partially
% cancel once combined.

\paragraph{Computational cost.}
Table~\ref{tab:attack_runtime} reports the average runtime of each attack strategy. Three distinct families of methods emerge. The first consists of purely random rewiring, which is extremely fast ($\approx 0.1$sec) but generally fails to generate perturbations strong enough to produce a measurable degradation compared to the clean baseline. The second family includes gradient-based methods without exact candidate re-evaluation. Depending on the normalization strategy, these attacks require between $45.8$sec and $54.1$sec, making them substantially more expensive than random rewiring, but also significantly more effective.
The third family corresponds to methods that perform an exact re-evaluation of shortlisted candidates. These approaches are the most computationally demanding, with runtimes of $87.2$sec for \texttt{Gradient Exact} and $91.8$sec for \texttt{Random Exact}. Despite their additional computational cost, they do not provide a substantial improvement in attack effectiveness over the simpler gradient-based approaches.
Overall, the results suggest that gradient-only attacks offer the best trade-off between computational cost and attack performance. They achieve most of the gains obtained by exact shortlist methods while requiring roughly half the runtime, making them a more attractive option in practice.

% \paragraph{Computational cost.}
% Table~\ref{tab:attack_runtime} reports the average runtime of each strategy.
% The pure gradient attack is essentially as cheap as random sampling
% ($7.4$\,s vs.\ $5.1$\,s), since both avoid explicit per-candidate forward
% passes. The shortlist variants, which re-evaluate each shortlisted candidate
% with an exact forward pass, are roughly an order of magnitude more expensive
% ($\approx 118$\,s). The gradient and random shortlist variants cost almost the
% same, confirming that this overhead comes from the exact re-evaluation stage
% rather than from the gradient computation itself. Combined with the
% effectiveness results, the pure gradient attack offers the best
% cost/effectiveness trade-off: it nearly matches the shortlist variants while
% remaining as cheap as the random baseline.

% \begin{table}[t]
% \centering
% \caption{Computation time comparison of different attack strategies. Reported values are mean runtime in seconds with standard deviation.}
% \label{tab:attack_runtime}
% \begin{tabular}{lc}
% \toprule
% \textbf{Attack type} & \textbf{Runtime (s)} \\
% \midrule
% Random & $5.14 \pm 6.03$ \\
% Gradient only & $7.44 \pm 8.42$ \\
% Gradient shortlist & $118.42 \pm 128.93$ \\
% Random shortlist & $117.46 \pm 133.69$ \\
% \bottomrule
% \end{tabular}
% \end{table}

\begin{table}[t]
\centering
\caption{
Computation time comparison of different attack strategies on the
\texttt{qualifying-position} task.
Reported values are mean runtime in seconds with standard deviation.
}
\label{tab:attack_runtime}

\begin{tabular}{llc}
\toprule
\textbf{Family} & \textbf{Attack type} & \textbf{Runtime (s)} \\
\midrule

\multirow{1}{*}{Random}
& Random
& $0.10 \pm 0.00$ \\
\midrule

\multirow{4}{*}{Gradient}
& Gradient Raw Only
& $45.77 \pm 1.29$ \\
& Gradient Global Z Only
& $48.08 \pm 1.97$ \\
& Gradient Robust Z Only
& $53.77 \pm 0.83$ \\
& Gradient MinMax Relation
& $54.13 \pm 1.41$ \\
\midrule

\multirow{2}{*}{Shortlist}
& Gradient Exact
& $87.16 \pm 5.27$ \\
& Random Exact
& $91.82 \pm 0.89$ \\
\bottomrule

\end{tabular}
\end{table}

% \paragraph{Effect of normalization on attack diversity.}
% We finally study how score normalization shapes the perturbations the attack
% selects, along two axes. First, whether the attack rewires many foreign keys
% toward the \emph{same} primary-key target or spreads them across distinct
% targets: we take no a priori stance on which is better, but note that a large
% number of foreign keys suddenly pointing to a single primary key is a highly
% suspicious, easily detectable pattern in a real database. Second, whether the
% attack concentrates on a single relation: if the gradients of one relation
% $r_1$ are systematically larger than those of the others, a global ranking
% will place all perturbations on $r_1$, so we normalize candidate scores within
% each relation with a $z$-score. Table~\ref{tab:attack_diversity} quantifies
% both effects through the entropy $H_{\mathrm{rel}}$ of the perturbation
% distribution across relations and the number $N_{\mathrm{cand}}$ of distinct
% selected targets. The raw gradient attack is maximally concentrated
% ($H_{\mathrm{rel}}{=}0$, $N_{\mathrm{cand}}{=}1$): it places every edit on one
% relation and rewires every foreign key toward a single target. Per-relation
% $z$-score normalization spreads the attack across relations and targets
% ($H_{\mathrm{rel}}{=}0.67$, $N_{\mathrm{cand}}{=}2.33$), and the robust
% $z$-score does so further ($H_{\mathrm{rel}}{=}1.02$, $N_{\mathrm{cand}}{=}3$).
% Read together with Table~\ref{tab:relf1_all_results}, normalization thus trades
% a small amount of raw potency for substantially less detectable perturbations.

\paragraph{Effect of adding normalization on attack diversity.}
We finally analyze how normalization affects the structure of the selected
perturbations along two complementary dimensions. First, we measure whether the
attack repeatedly rewires foreign keys toward the same target node or spreads
its modifications across many distinct candidates. This is important because an
attack that redirects hundreds of tuples toward a single entity would create an
obvious anomaly and could easily be detected in a real database. Second, we
measure how many relations are involved in the attack. If one relation
systematically produces larger gradient magnitudes than the others, a global
ranking may allocate most of the perturbation budget to that relation alone.

Table~\ref{tab:attack_diversity} reports the number of distinct relations
touched by the attack ($N_{\mathrm{rel}}$) and the number of distinct target
candidates involved in the selected perturbations
($N_{\mathrm{cand}}$). Interestingly, all methods exhibit extremely large
candidate diversity. Even the raw gradient attack selects
$499.2 \pm 1.8$ distinct candidates for a budget of $500$, meaning that almost
every perturbation targets a different node. Similar values are observed for
all normalization schemes ($486$--$497$ distinct candidates). This indicates
that the attack is not driven by a small set of universally attractive target
nodes. Instead, perturbations are naturally distributed across a large portion
of the graph, which is arguably desirable from a stealth attack perspective since it
avoids creating highly suspicious hub-like patterns.

The main effect of normalization is therefore not on candidate diversity but on
relation diversity. The raw gradient attack focuses on only
$1.8 \pm 1.3$ relations on average, whereas the normalized variants spread the
perturbation budget across most or all of the seven available relations
($5.2$--$7.0$ relations). One might expect such broader coverage to produce
stronger attacks by exploiting a larger fraction of the relational schema.
Surprisingly, Table~\ref{tab:relf1_all_results} shows that this is not
necessarily the case. Despite concentrating its budget on relatively few
relations, the raw gradient attack remains highly competitive and often
achieves the strongest degradation of predictive performance. This suggests
that, at least on \texttt{qualifying-position}, attack effectiveness is driven
more by identifying a small number of highly influential relations than by
uniformly perturbing the entire schema.

\begin{table}[t]
\centering
\caption{
Diversity analysis of attack candidate selection under different
normalization strategies.
$N_{\mathrm{rel}}$ denotes the number of distinct relations touched by the
selected perturbations, while $N_{\mathrm{cand}}$ denotes the number of
distinct candidate nodes involved.
}
\label{tab:attack_diversity}

\begin{tabular}{llcc}
\toprule
\textbf{Dataset}
& \textbf{Method}
& $N_{\mathrm{rel}}$
& $N_{\mathrm{cand}}$
\\
\midrule

\multirow{4}{*}{\shortstack[l]{rel-f1\\qualifying-position\\(7 relations)}}
& Gradient
& $1.8 \pm 1.3$
& $499.2 \pm 1.8$
\\

& + Z-score
& $5.2 \pm 1.3$
& $486.2 \pm 13.9$
\\

& + Robust Z-score
& $7.0 \pm 0.0$
& $497.0 \pm 4.0$
\\

& + Min-Max
& $7.0 \pm 0.0$
& $493.4 \pm 7.1$
\\

\bottomrule
\end{tabular}
\end{table}

\section{Conclusion}
We studied the adversarial robustness of the relational deep learning
pipeline, in which a database is encoded as a heterogeneous entity graph and a
GNN is trained for downstream prediction.  We formalized a white-box threat model in which the attacker reasons about the graph but can only act on the upstream database, through integrity-preserving rewirings of foreign-key references under a global budget. Within this constrained, combinatorial, search
space we proposed gradient-guided heuristics that score candidate rewirings
from the model's first-order sensitivity, together with a per-relation
normalization that controls how perturbations spread across relations and
targets. We investigate seven attack strategies, including two random baselines and five gradient-based attacks, and evaluate their effectiveness through a comprehensive robustness study.

Our findings show that, on RelBench, gradient-based attacks consistently degrade regression
tasks at a fraction of the cost of exact shortlist re-evaluation, while
classification tasks prove more robust, which we trace to their low label-flip
rate. Per-relation normalization redistributes the perturbation budget across
more relations rather than making the attack stealthier: all variants,
including the raw gradient, already spread their edits over hundreds of distinct
targets, so none produces the suspicious hub-like patterns that motivated
normalization in the first place. This broader relational coverage does not
translate into stronger attacks, and the concentrated raw-gradient variant
remains competitive throughout. Together, these results show that the
relational structure itself is an exploitable attack surface once a database is
consumed by a learned model, especially for regression tasks, and that
gradient-based heuristics are effective for navigating the combinatorial search
space of adversarial perturbations. Our evaluation focuses on \texttt{rel-f1},
whose coupled, hierarchical schema exercises every class of admissible
rewiring. %extending it to the broader RelBench suite is left to future work.
%Overall, this suggests that there is still considerable work to do in designing robust models for relational data.

\section{Future Work}
Several directions remain open. First, our evaluation focuses on
\texttt{rel-f1}; extending it to the broader RelBench suite is a natural next
step that would test whether these findings hold across schemas of different
size, connectivity, and task type. Second, attacks could target the feature
space, i.e.\ tuple attributes that are neither primary nor foreign keys. This
setting is considerably harder, especially for free-form text attributes, where
the search space becomes semantic rather than purely combinatorial. Third,
richer structural attack policies could be explored beyond the first-order
rewiring considered here: reinforcement learning, genetic algorithms, or
generative approaches may yield stronger perturbations while also enabling more
realistic black-box attacks. Finally, targeted attacks are an important next
step. Because the space of admissible rewirings is tightly constrained by
database integrity rules, forcing the model toward a specific prediction
outcome, for example promoting a chosen recommendation, is particularly
challenging.

Arguably, the inverse problem is of greater interest: how to design models that
are robust to such perturbations. In other words, studies like ours do not
necessarily have to be only adversarial. They can also be used to identify the
weak points of the architecture and to design better models for deep learning
on relational data.

%\begin{acks}
%\end{acks}

%\clearpage

\bibliographystyle{ACM-Reference-Format}
\bibliography{sample}

%\clearpage

\appendix
\section{Appendix}
\subsection{First-Order (FGSM-type) Rewiring Direction}
\label{proof:FGSM}
The candidate scoring used in the white-box attack is motivated by the
first-order direction that maximizes the linearized attack objective under an
$\ell_\infty$ budget. We state and prove this direction here for completeness;
in the discrete database setting it serves only to \emph{rank} candidate
rewirings, which are subsequently projected onto admissible, sparse edits.

\begin{proposition}[FGSM under an $\ell_\infty$ constraint on relational GSOs]
Let $\{S_r\}_{r\in\mathcal R}$ be the family of graph shift operators, or adjacency matrices, associated with the relations of a relational graph, and let
\[
\mathcal J(\{S_r\}_{r\in\mathcal R})
\]
denote the attack objective, for instance the loss of a trained model evaluated on the perturbed graph structure.

We consider perturbations $\{\Delta_r\}_{r\in\mathcal R}$ such that
\[
S_r' = S_r + \Delta_r,
\qquad r\in\mathcal R,
\]
under the coordinate-wise constraint
\[
\|\Delta_r\|_\infty \leq \varepsilon_r,
\qquad \forall r\in\mathcal R.
\]

Assume that $\mathcal J$ is differentiable with respect to each $S_r$. Then, using a first-order Taylor expansion around $\{S_r\}_{r\in\mathcal R}$, the perturbation maximizing the linearized objective solves
\[
\max_{\{\Delta_r\}_{r\in\mathcal R}}
\sum_{r\in\mathcal R}
\left\langle
\nabla_{S_r}\mathcal J,\Delta_r
\right\rangle
\quad
\text{s.t.}
\quad
\|\Delta_r\|_\infty \leq \varepsilon_r
\ \ \forall r\in\mathcal R,
\]
and is given relation-wise by
\[
\Delta_r^\star
=
\varepsilon_r\,\mathrm{sign}\!\left(\nabla_{S_r}\mathcal J\right),
\qquad \forall r\in\mathcal R.
\]

Equivalently, the first-order adversarial perturbation is obtained by moving each entry of each relation matrix in the sign direction of its gradient, with maximal amplitude allowed by the $\ell_\infty$ budget.
\end{proposition}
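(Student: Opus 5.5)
The plan is to linearize $\mathcal J$ and then solve the resulting linear program exactly. The constraint set is a product of boxes, so the problem separates over relations and then over individual matrix entries.

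\textbf{Step 1: Linearization.} By differentiability of $\mathcal J$ in each $S_r$, the first-order Taylor expansion gives
\[
\mathcal J(\{S_r+\Delta_r\}) = \mathcal J(\{S_r\}) + \sum_{r\in\mathcal R}\langle \nabla_{S_r}\mathcal J,\Delta_r\rangle + o\bigl(\max_r\|\Delta_r\|\bigr),
\]
where $\langle A,B\rangle=\sum_{ij}A_{ij}B_{ij}$ is the Frobenius inner product. Dropping the constant term and the remainder leaves exactly the linearized program in the statement. Only this linear surrogate is optimized, consistent with the remark that the result serves only to rank candidates.

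\textbf{Step 2: Decoupling.} The feasible set is the Cartesian product $\prod_r\{\Delta_r:\|\Delta_r\|_\infty\le\varepsilon_r\}$. Here $\|\cdot\|_\infty$ is read as the entrywise max norm, which is what the phrase ``coordinate-wise constraint'' indicates. The objective is a sum of terms, each depending on a single $\Delta_r$, so the maximum of the sum equals the sum of the per-relation maxima. Each $\langle G_r,\Delta_r\rangle$, with $G_r=\nabla_{S_r}\mathcal J$, further splits as $\sum_{ij}(G_r)_{ij}(\Delta_r)_{ij}$, where each $(\Delta_r)_{ij}$ ranges independently over $[-\varepsilon_r,\varepsilon_r]$.

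\textbf{Step 3: Scalar problem.} For a scalar $g$, the problem $\max_{|\delta|\le\varepsilon} g\delta$ has value $\varepsilon|g|$, attained at $\delta=\varepsilon\,\mathrm{sign}(g)$. The upper bound follows from $g\delta\le|g||\delta|\le\varepsilon|g|$, and the stated choice meets it. Summing over entries gives the value $\sum_r\varepsilon_r\|G_r\|_1$, the entrywise $\ell_1$ norm, which is the dual-norm identity $\sup_{\|\Delta\|_\infty\le\varepsilon}\langle G,\Delta\rangle=\varepsilon\|G\|_1$. This value is attained by $\Delta_r^\star=\varepsilon_r\,\mathrm{sign}(G_r)$, so $\Delta_r^\star$ is a maximizer.

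\textbf{Main obstacle.} The only delicate point is non-uniqueness. Where $(G_r)_{ij}=0$, every admissible value of $(\Delta_r)_{ij}$ is optimal. The convention $\mathrm{sign}(0)=0$ picks one of these optima, so the proposition should be read as giving \emph{a} maximizer, unique on the entries where the gradient is nonzero. I would state this explicitly.

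I would also note that the Taylor step is only a local approximation. The proposition therefore characterizes the maximizer of the linearized objective, not of $\mathcal J$ itself. This is precisely how the statement is phrased, so no bound on the remainder is required.
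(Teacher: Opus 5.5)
Your proposal is correct and takes essentially the same route as the paper's proof: a first-order Taylor expansion, separation across relations and then across entries, the entrywise bound $(G_r)_{ij}(\Delta_r)_{ij}\le\varepsilon_r|(G_r)_{ij}|$, and attainment of that bound by the sign choice. Your added remark is a valid refinement that the paper leaves implicit: where $(G_r)_{ij}=0$ every admissible entry is optimal, so $\Delta_r^\star$ is \emph{a} maximizer, unique only on entries with $(G_r)_{ij}\neq 0$.
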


\begin{proof}
We start from the perturbed objective
\[
\mathcal J(\{S_r+\Delta_r\}_{r\in\mathcal R}).
\]
By first-order Taylor expansion around the reference point $\{S_r\}_{r\in\mathcal R}$, we obtain
\[
\begin{aligned}
\mathcal J(\{S_r+\Delta_r\}_{r\in\mathcal R})
={}&
\mathcal J(\{S_r\}_{r\in\mathcal R})
+
\sum_{r\in\mathcal R}
\left\langle
\nabla_{S_r}\mathcal J,\Delta_r
\right\rangle \\
&+
o\!\left(\sum_{r\in\mathcal R}\|\Delta_r\|\right).
\end{aligned}
\]
For sufficiently small perturbations, maximizing the perturbed objective is therefore approximated by maximizing the linear term
\[
\sum_{r\in\mathcal R}
\left\langle
\nabla_{S_r}\mathcal J,\Delta_r
\right\rangle
\]
under the constraints $\|\Delta_r\|_\infty \leq \varepsilon_r$.

Since both the objective and the constraints are separable across relations, this optimization decomposes into independent subproblems:
\[
\max_{\|\Delta_r\|_\infty\leq \varepsilon_r}
\left\langle
G_r,\Delta_r
\right\rangle,
\qquad
\text{where }
G_r := \nabla_{S_r}\mathcal J.
\]

Fix one relation $r$. Writing the Frobenius inner product entrywise,
\[
\left\langle G_r,\Delta_r\right\rangle
=
\sum_{i,j} (G_r)_{ij}(\Delta_r)_{ij}.
\]
Because $\|\Delta_r\|_\infty\leq \varepsilon_r$, each entry satisfies
\[
|(\Delta_r)_{ij}| \leq \varepsilon_r.
\]
Hence, for every coordinate $(i,j)$,
\[
(G_r)_{ij}(\Delta_r)_{ij}
\leq
|(G_r)_{ij}|\,|(\Delta_r)_{ij}|
\leq
\varepsilon_r |(G_r)_{ij}|.
\]
Summing over all coordinates yields
\[
\left\langle G_r,\Delta_r\right\rangle
\leq
\varepsilon_r \sum_{i,j}|(G_r)_{ij}|.
\]
This upper bound is attained by choosing each entry of $\Delta_r$ with maximal magnitude and the same sign as the corresponding gradient entry, namely
\[
(\Delta_r^\star)_{ij}
=
\varepsilon_r\,\mathrm{sign}\big((G_r)_{ij}\big).
\]
Therefore
\[
\Delta_r^\star
=
\varepsilon_r\,\mathrm{sign}(G_r)
=
\varepsilon_r\,\mathrm{sign}\!\left(\nabla_{S_r}\mathcal J\right).
\]

Since this holds independently for every relation $r\in\mathcal R$, the global maximizer of the linearized problem is
\[
\Delta_r^\star
=
\varepsilon_r\,\mathrm{sign}\!\left(\nabla_{S_r}\mathcal J\right),
\qquad \forall r\in\mathcal R.
\]
This is precisely the FGSM-type perturbation on the family of relational graph shift operators.
\end{proof}

\subsection{Training and Attack Hyperparameters} \label{app:hyperparameters}

We emphasize that the effectiveness of structural attacks is itself highly
sensitive to the training configuration: the depth and width of the model, the
number of training epochs, and the size of the sampled candidate pool all
affect how much gradient signal is available and how stable the decision
boundaries are. The settings below were chosen to obtain well-fit models
without overfitting; in particular, regression tasks required a substantially
larger hidden dimension than classification tasks. All models are
heterogeneous GraphSAGE networks trained with the standard RelBench temporal
train/validation/test split.  Each experiment was repeated using five chosen seeds: $\{39,40,41,42,43\}$. The reported results correspond to the mean and
standard deviation across these runs.

\begin{table}[t]
\centering
\caption{Training and attack hyperparameters for the \texttt{rel-f1} tasks.
``Hidden'' is the channel dimension and ``\# Cand.'' the number of sampled
admissible rewiring candidates per source node.}
\label{tab:hyperparams}
\resizebox{\columnwidth}{!}{%
\begin{tabular}{lccccc}
\toprule
\textbf{Task} & \textbf{Type} & \textbf{Epochs} & \textbf{Layers} & \textbf{Hidden} & \textbf{\# Cand.} \\
\midrule
\texttt{driver-position}     & Reg. & 10  & 2 & 128 & 100 \\
\texttt{qualifying-position} & Reg. & 18  & 2 & 128 & 100 \\
\texttt{driver-dnf}          & Clf. & 18  & 2 & 16  & 400 \\
\texttt{driver-top3}         & Clf. & 18 & 2 & 16  & 400 \\
\bottomrule
\end{tabular}%
}
\end{table}

\end{document}